\begin{document}
\title{PAC Learning Guarantees Under Covariate Shift}
%
%
\author{Artidoro Pagnoni\inst{1,2}\and
Stefan Gramatovici\inst{1,2} \and
Samuel Liu\inst{2}}
\authorrunning{Pagnoni, Gramatovici, and Liu}
%
\institute{Microsoft AI \and
Harvard University\\
\email{\{artidoro.pagnoni,stefan.gramatovici\}@microsoft.com}\\
\email{samuelzliu@college.harvard.edu}}

\maketitle              
\begin{abstract}
We consider the Domain Adaptation problem, also known as the covariate shift problem, where the distributions that generate the training and test data differ while retaining the same labeling function. This problem occurs across a large range of practical applications, and is related to the more general challenge of transfer learning. Most recent work on the topic focuses on  optimization techniques that are specific to an algorithm or practical use case rather than a more general approach. The sparse literature attempting to provide general bounds seems to suggest that efficient learning even under strong assumptions is not possible for covariate shift. Our main contribution is to recontextualize these results by showing that any Probably Approximately Correct (PAC) learnable concept class is still PAC learnable under covariate shift conditions with only a polynomial increase in the number of training samples. This approach essentially demonstrates that the Domain Adaptation learning problem is as hard as the underlying PAC learning problem, provided some conditions over the training and test distributions. We also present bounds for the rejection sampling algorithm, justifying it as a solution to the Domain Adaptation problem in certain scenarios.

\keywords{Statistical Learning Theory \and PAC Learning \and Domain Adaptation \and Transfer Learning \and Sample Complexity \and Unlabeled Data}
\end{abstract}
\section{Introduction}

\subsection{Problem Setting}
The standard machine learning problem formulation assumes that training and test data are generated by the same underlying process. Intuitively, it is only possible to learn that which has been already experienced; the objective of training is to expose the learning model to data that is similar to what it will be expected to perform on. However, the assumption that the training and test data come from the same distribution is restrictive. There are many situations where test and training distributions differ, including drift of the generative process over time or unavailability of data from the target distribution. Dealing with this relaxation of the standard ML assumption is known as Domain Adaptation (DA). Consider, for example, the task of predicting health outcomes in the general population, while only having access to data from university affiliates across the United States. The training distribution will be a biased sample, as individuals affiliated with a university will likely be younger than the general population and have access to better health care on average. Blindly applying a learning algorithm to the available training data might lead to inaccurate and unrepresentative predictions for the general population. As such, the challenge of DA is to generalize learning from one domain to another, and thus has seen many practical applications from sentiment analysis \cite{DBLP:conf/aaai/AdelW15} to spam filtering \cite{DBLP:journals/jmlr/BickelBS09} to computational biology \cite{DBLP:conf/ismb/BorgwardtGRKSS06}.

Similarly, the standard Probably Approximately Correct (PAC) learning paradigm \cite{DBLP:journals/cacm/Valiant84} assumes that the training and test (target) data are generated from the same distributions. The PAC learning paradigm is one where a learner relies on a set of labeled examples (training set) from which it produces a hypothesis. The objective of the learner is to construct the hypothesis that most closely matches the original concept that generated the labeled training examples. PAC learning can be used to analyze the efficiency and bounds of learning algorithms in general rather than any one specific algorithm. If we wish to say anything about our general and theoretical ability to solve the Domain Adaptation challenge (as opposed to ``domain'' and algorithm specific results), then studying a modified version of PAC learning is a natural avenue to pursue. In particular, we will investigate PAC learning in the context of covariate shift \cite{DBLP:books/daglib/0001548}, a sub-problem of Domain Adaptation where the test and target distributions differ but share a labeling function. Our work contributes to the existing literature, by showing that it is possible to efficiently PAC learn under a covariate shift with the standard assumption that the support of the target distribution is contained in the support of the training distribution. In particular, we will prove that if a concept class is PAC learnable without covariate shift (implying that the source and target distributions are the same), then complicating the problem by introducing covariate shift will still allow for polynomial time PAC learning. This essentially provides an upper-bound on the difficulty of the domain adaptation problem. We will then show how this upper-bound can be improved on in the discrete and finite variance scenario using rejection sampling. 

The paper is outlined in the following manner. In section 2, we introduce important concepts and definitions. We then prove our results in section 3 and 4. Finally, we will discuss their implications and conclude in section 5. 

\subsection{Related Work}
Much of the work on Domain Adaptation has focused on practical improvements for specific machine learning algorithms such as deep neural nets \cite{DBLP:conf/icml/IoffeS15}, linear classifiers \cite{DBLP:conf/icml/GermainHLM13}, or regression \cite{DBLP:conf/alt/CortesM11}. There is also literature on techniques to convert one domain to the other by re-weighting distributions \cite{weight,DBLP:journals/jip/TsuboiKHBS09} or rejection sampling \cite{DBLP:journals/sigpro/MartinoM10}, and related work on dealing with different application domains \cite{DBLP:conf/aaai/AdelW15,DBLP:journals/jmlr/BickelBS09,DBLP:conf/ismb/BorgwardtGRKSS06}.

Attempts to prove bounds for the Domain Adaptation problem tend to rely on assumptions that limit the scope of the results. For instance, there are bounds on specific learning algorithms such as linear classifiers in Germain et al. \cite{DBLP:conf/icml/GermainHLM13}, or nearest neighbors in Ben-David et al. \cite{DBLP:journals/amai/Ben-DavidU14}, and bounds on generalization error estimators in Sugiyama et al. \cite{Sugiyama05generalizationerror}.

More general work on Domain Adaptation bounds has been conducted by Bartlett who introduced the related problem of learning with drifting distributions \cite{DBLP:conf/colt/Bartlett92} and proved PAC guarantees. In this setting, the joint distribution over the input and labels is allowed to drift over time under the assumption that the $L1$ distance between the distributions in two consecutive time steps is bounded. Recently, Mohri et al. \cite{DBLP:conf/alt/MohriM12} demonstrated tighter bounds based on a different distance metric for distributions.

However, the most prolific author on this topic is Ben-David et al. \cite{Ben-DavidBCKPV10,DBLP:journals/jmlr/Ben-DavidLLP10,Ben-DavidU12}. We will discuss \cite{Ben-DavidU12} in detail, which posits that DA is generally hard (not polynomial in the dimension of the input space), and recontextualize its main result.

\section{Preliminaries}
\subsection{Definitions}
The Probably Approximately Correct (PAC) learning model introduced by Valiant \cite{DBLP:journals/cacm/Valiant84} provides a set of abstractions to study learning tasks. This approach takes into account both accuracy of predictions and confidence of such accuracy. We begin by defining PAC learning.

Consider the following learning setup. Let $\mathcal{X}$ denote an input space and $\mathcal{Y}$ the label space. We let $\mathcal{C}$ denote a concept class which is a set of functions $c$ that map elements in the input space to elements in the label space $c : \mathcal{X} \rightarrow \mathcal{Y}$. Let $n$ be the dimension of the input space $\mathcal{X}$, and $size(c)$ the size of the smallest representation of $c$. The objective of a general learning task is to predict with high accuracy the label of elements of $\mathcal{X}$ given some training samples. The training samples are labeled examples drawn from $\mathcal{X}$ according to some unknown distribution $\mathcal{D}$. To model this process, it is customary in learning theory to consider an oracle $EX(c, \mathcal{D})$ which outputs labeled examples $\langle x, c(x)\rangle$ one at a time such that $x\sim P_{\mathcal{D}}$. After training, we say that a learning algorithm produces a hypothesis $h$, a map $h: \mathcal{X}\rightarrow \mathcal{Y}$ from input to label space. It is natural to measure the accuracy of such a hypothesis using $error_{\mathcal{D}}(h) = P_{x\sim P_{\mathcal{D}}}(h(x)\neq c(x))$. In learning scenarios, what matters is that the accuracy is high for common samples which will be built in the PAC model through the confidence parameter.

\begin{definition}[Probably Approximate Correct model]
A concept class $\mathcal{C}$ over $\mathcal{X}$ is PAC learnable if there exists an algorithm $\mathcal{A}$ such that for every concept $c\in \mathcal{C}$, for every distribution $\mathcal{D}$ on $\mathcal{X}$, and for all $\epsilon, \delta > 0$ if $\mathcal{A}$ is given access to $EX(c, \mathcal{D})$ and inputs $\epsilon$ and $\delta$, then with probability at least $1-\delta$, $\mathcal{A}$ outputs a hypothesis $h \in \mathcal{C}$ satisfying $error_{\mathcal{D}}(h)\leq\epsilon$. This probability is taken over random examples drawn by calls to $EX(c, \mathcal{D})$

If $\mathcal{A}$ runs in time polynomial in $1/\epsilon$, $1/\delta$, $n$ and $size(c)$ we say that $\mathcal{C}$ is efficiently PAC learnable.
\end{definition}

\noindent Note that $\epsilon$ and $\delta$ will refer to the error parameter and the confidence parameter respectively.

In this paper we focus on Domain Adaptation (DA) learning. DA learning differs from PAC learning theory in only two key assumptions. In DA learning we have a training distribution that is different from the distribution on which the algorithm will be tested. In data science and statistics this problem is also known as covariate shift. We call $\mathcal{D}_S$ the source distribution on the training set, and $\mathcal{D}_T$ the target distribution on the test set. The second difference is that in DA we allow the learner to have access to an oracle $EX(\mathcal{D}_T)$ of unlabeled samples from $\mathcal{X}$.

\begin{definition}[Domain Adaptation Learning model]
A concept class $\mathcal{C}$ over $\mathcal{X}$ is DA learnable if there exists an algorithm $\mathcal{A}$ such that for every concept $c\in \mathcal{C}$, for any distributions $\mathcal{D}_S$ and $\mathcal{D}_T$ on $\mathcal{X}$, and for all $\epsilon, \delta > 0$ if $\mathcal{A}$ is given access to $EX(c, \mathcal{D}_S)$, and $EX(\mathcal{D}_T)$ and inputs $\epsilon$ and $\delta$, then with probability at least $1-\delta$, $\mathcal{A}$ outputs a hypothesis $h \in \mathcal{C}$ satisfying $error_{\mathcal{D}_T}(h)\leq\epsilon$.
This probability is taken over random examples drawn by calls to $EX(\mathcal{D}_T)$.

If $\mathcal{A}$ runs in time polynomial in $1/\epsilon$, $1/\delta$, $n$ and $size(c)$ we say that $C$ is efficiently DA learnable.
\end{definition}

\noindent In light of the definition of the DA learning model, it appears useful to have a tool to measure the distance between distributions. The definition of distance that we employ is $L1$ distance as used in Bartlett et al.\cite{DBLP:conf/colt/Bartlett92}.
\begin{definition}[Distribution distance]
Given two distributions $\mathcal{D}_S$ and $\mathcal{D}_T$ on the universe $\mathcal{X}$ let their L1 distance $d(\mathcal{D}_S, \mathcal{D}_T)$ be:
    $$d(\mathcal{D}_S, \mathcal{D}_T) = \sup_{E\subseteq \mathcal{X}}\left|P_{\mathcal{D}_S}(E) -P_{ \mathcal{D}_T}(E)\right|$$
\end{definition}
\noindent Using the $L1$ metric gives the most tractable definition of distance between two distributions. It consists of taking the maximum change in probability over all events in the universe. 

In order to show our bounds for rejection sampling in section 4, we also need the definition of discrepancy between distributions  \cite{DBLP:conf/alt/MohriM12}. This is another approach to represent the distance between distributions, the main difference from $L1$ distance being that it is done in terms of a loss function. We define a loss function as a map $L : \mathcal{Y} \times \mathcal{Y} \rightarrow \mathds{R}_{+}$. In the rest of our analysis, we assume that $L$ is bounded by some constant $M > 0$. For any hypothesis $h : \mathcal{X} \rightarrow
\mathcal{Y}$ and any distribution $\mathcal{D}$ over the input space $\mathcal{X}$, we denote by $\mathcal{L}_{\mathcal{D}} (h)$ the expected loss of $h$:
\begin{equation}
\mathcal{L}_{\mathcal{D}}(h) = \mathbb{E}_{x\sim P_{\mathcal{D}}}[L(h(x), y)]
\end{equation}
Notice that the above reduces to $error_\mathcal{D}(h)$ the error probability of hypothesis $h$ under distribution $\mathcal{D}$ if we use the PAC loss function $L(h(x), y) = 1$ if $h(x)\neq y$, and 0 otherwise. 
\begin{definition}[Distribution discrepancy]
Given two distributions $\mathcal{D}_S$ and $\mathcal{D}_T$  on the universe $\mathcal{X}$ and a set $H$ of hypotheses, define the discrepancy $\mathrm{disc}(\mathcal{D}_S, \mathcal{D}_T)$ between the two distribution as:
$$
   \mathrm{disc}(\mathcal{D}_S, \mathcal{D}_T) = \sup_{h\in H}\left|\mathcal{L}_{\mathcal{D}_S}(h) - \mathcal{L}_{\mathcal{D}_T}(h)\right|
$$
\end{definition} 

\subsection{Assumptions}
As noted by Shai Ben-David \cite{Ben-DavidBCKPV10,Ben-DavidU12}, a basic analysis of the definitions shows that DA learning can be impossible when the source and target distributions have non-intersecting domains. Intuitively, training will not provide any useful information about the concept without further assumptions on the domain space and concept class. To avoid taking into account these scenarios, it is common to enforce a bounded ratio between the distributions at all points. However, this is an overly restrictive definition. We therefore follow Ben-David et al. \cite{Ben-DavidBCKPV10} in adopting a relaxation of this definition. The relaxation that they propose only requires the domain of the target distribution to be included in the domain of the source distribution, and sets a bound on the ratio of probabilities of events in the intersection of their domains. 

\begin{definition}[Weight ratio]
Given two distributions $\mathcal{D}_S$ and $\mathcal{D}_T$ over universe $\mathcal{X}$ the weight ratio $W(\mathcal{D}_S, \mathcal{D}_T)$ is defined as: 
$$W(\mathcal{D}_S, \mathcal{D}_T) = \inf_{E\in Z}\frac{P_{\mathcal{D}_S}(E)}{P_{\mathcal{D}_T}(E)}$$
Where $Z = \{E : E\subseteq \mathcal{X} ~\text{and}~ P_{\mathcal{D}_T}(E) \neq 0\}$.
\end{definition}


\noindent Similar to \cite{Ben-DavidBCKPV10}, we will assume in our analysis of the DA problem that the domain of the target distribution is included in the domain of the source distribution and that there exists a bounded weight ratio such that for some $w \in \mathds{R}_+$: $$W(\mathcal{D}_S, \mathcal{D}_T) = 1/w$$.

\section{Can DA learning be efficient?}

In this section we present the main contributions of our work. We demonstrate that DA is not any harder than PAC learning. This naturally refutes a common misunderstanding of the conclusions of the work by Ben-David et al. \cite{Ben-DavidU12}. 

\subsection{Recontextualization of Ben-David et al. \cite{Ben-DavidU12}}
The commonly cited Ben-David et al. paper suggests that Domain Adaptation is not efficiently PAC learnable. Via a recontextualization of their main result, we will show that the plight of Domain Adaptation is not as grim as has been reported. First, we will outline the main steps in their reasoning, starting with a formal presentation of the main result of their work:\\

\noindent \textbf{Ben-David et al. main result}
 \textit{
For every finite domain $\mathcal{X}$, for every $\epsilon, \delta > 0$, no algorithm can efficiently solve the DA problem even with $W(\mathcal{D}_S,\mathcal{D}_T) \geq 1/2$ assuming $s$ and $t$, the number of samples from the labeled and unlabeled oracles respectively, are such that $s+t < \sqrt{(1-2(\epsilon + \delta))|\mathcal{X}|}-2$.}\\

\noindent A close investigation of this result requires a brief exposition of the Left/Right problem introduced by Kelly et al. \cite{LeftRight}. 
Let $\mathcal{D}_L, \mathcal{D}_R$ and $\mathcal{D}_M$ be distributions over $\mathcal{X}$. Let  $L$ and $R$ be sets of  $l$ and $r$ independent draws from $\mathcal{D}_L$ and $\mathcal{D}_R$ respectively, and let $\mathcal{D}_L$ and $\mathcal{D}_R$ be the sets of points $E$ from $\mathcal{X}$ that $P_{\mathcal{D}_L} (E)$ and $P_{\mathcal{D}_R} (E)$ are non-zero.
Let $M$ be a set of  $m$ independent draws from $\mathcal{D}_M$ which is either $\mathcal{D}_L$ or $\mathcal{D}_R$. 
The goal of the Left/Right problem is to predict whether $M$ was generated according to $\mathcal{D}_L$ or $\mathcal{D}_T$.

\begin{definition}[Left/Right problem]
We say that a learning algorithm $\mathcal{A}$ efficiently solves the Left/Right problem if, given samples $L$, $R$ and $M$ of sizes $l,r$ and $m$, it outputs the correct answer with probability at least $1-\gamma$ in time polynomial in $1/\gamma, l, r$ and $m$. 
\end{definition}

\noindent Ben-David et al. prove their main result in two steps. They first show that the Left/Right problem is not efficiently solvable because there is a specific case of the problem that is not efficiently solvable. Then, they prove that the Left/Right problem reduces to the DA problem. The first step gives a bound on the sample size needed to solve the Left/Right problem by using the specific instance of $\mathcal{D}_R \cap \mathcal{D}_L = \emptyset$, $|\mathcal{D}_R| = |\mathcal{D}_L|$, and $\mathcal{D}_R \cup \mathcal{D}_L = \mathcal{X}$. Through reduction of this into the domain adaptation problem, they translate the bound obtained in the first step into the one stated in the main result of their paper. 

Our recontextualization posits that the implications of this result are not as wide ranging as the paper suggests. The formal statement of this bound on solving the DA problem seems to indicate that it holds for all instances of the DA problem. However, this is a misunderstanding of the reduction step in the proof. Rather than showing the hardness of the general DA problem even under strict assumptions, this result merely shows that there exists a sub class of the Domain Adaptation problem that is not efficiently PAC learnable. This is because the reduction from the Left/Right problem has no implications on the sub-class of the Domain Adaptation that is relevant: the one where the non-covariate shifted problem is efficiently PAC learnable. We can see this by first showing that the non-covariate shifted version of the specific Left/Right problem used in the proof above is also not efficiently learnable.

Take the left and right distributions to be the ones described in the reduction by Ben-David et al., where $\mathcal{D}_R \cap \mathcal{D}_L = \emptyset$, $|\mathcal{D}_R| = |\mathcal{D}_L|$, and $\mathcal{D}_R \cup \mathcal{D}_L = \mathcal{X}$. The source distribution $\mathcal{D}_S$ can be described as follows: a new data point is generated by flipping a fair coin and choosing the next point from $\mathcal{D}_L$ if it lands heads and from $\mathcal{D}_R$ otherwise. We label points generated from the source distribution using the PAC-loss function, with a value of 0 for heads and 1 for tails. This matches the source distribution from reduction in the Ben-David et al. paper which selects elements from $L\times \{0\} \cup R\times \{1\}$. The target distribution $\mathcal{D}_T$ is identical except without the labels. Thus, we have a non-covariate shifted version of the Left/Right problem instance presented in the original reduction. The question now is whether an algorithm can efficiently produce a hypothesis $h$ that is correct with probability of at least $1-\delta$ and $error_\mathcal{D} (h) \leq \epsilon$. We see that this is not possible; an adaptation of the proof presented in the the Ben-David et al. paper suffices, but we will present an alternative method here for readability:
\begin{proof}
For any given data point generated by the target distribution, we definitively know its label if it is already present in the training set. Otherwise, we have no other information about the label besides that it is equally likely that the label is either from the right or the left distributions. Thus we have an error:  $$\epsilon \geq \frac{1}{2}\left(\frac{n-1}{n}\right)^k + \left(1- \left(\frac{n-1}{n}\right)^k \right) $$
where $n = |R| + |L| = |\mathcal{X} |$ in this instance and $k$ is the training sample size. Simplifying this gives us a bound of approximately $ k \geq \frac{1}{\log\frac{n}{n-1}}\log(1-\delta)^2  \approx n\log(1-\delta)^2 $
and since $n$ scales exponentially with the dimension of the input space, this bound implies that the non-covariate shifted version of the Left/Right instance presented in the reduction is hard. 
\end{proof}

Thus, we have shown that there exists an instance of the standard PAC learning problem (with no covariate shift or domain adaptation) that is hard. If we extrapolate on the same basis as the Ben-David et al. paper, we would say that there exists no algorithm which solves PAC learning efficiently, namely that PAC learning is hard. But the statement that there are instances of standard PAC learning which are hard is trivial. The motivation behind the original PAC learning model was a desire to understand which problems are efficiently learnable and which are not, so it is not surprising that there exists some which are not efficiently learnable. 

It is a common technique to provide a lower bound on efficiently solving a class of problems by providing an instance of the problem that has some bounds on efficiency. The idea is that if an algorithm is said to solve a class of problems with a given complexity, then it has to solve any instance in that complexity or better. Thus, the presence of an instance that cannot efficiently be solved leads us to say that the class of problems is not efficiently solvable. This is certainly true in the case of domain adaptation and PAC learning as well, but it is important to examine whether the results are relevant. Just as the statement that PAC learning cannot be efficiently solved in general is a non-relevant and trivial result so too is the statement that DA is not efficiently PAC learnable. 

Just because an example of DA that is not efficiently PAC learnable, that does not mean that every instance of the DA problem is hard. In fact, there may be entire sub-classes that are efficiently learnable. We only care about the sub-class where the non-shifted problem is efficiently PAC learnable because if an instance of a non-shifted PAC problem is hard, then we should not expect the addition of a covariate shift to not be hard. This is precisely the case with the main result from the Ben-David et al. paper. Instead, we want to explore what happens if the original non-DA problem is efficiently PAC learnable. Is that still the case once we have have a covariate shift? This is explored below. 

\subsection{Domain Adaption on PAC Learnable Problems}
We show that the Domain Adaptation problem does not add any more complexity to the underlying PAC learning problem under the assumption that the source and target distributions have finite weight ratio $1/w$. PAC learning under DA involves a polynomial of $w$ increase in the number of samples required to learn.

\begin{theorem}
Let $\mathcal{C}$ be a concept class over input space $\mathcal{X}$ that is efficiently PAC learnable. It follows that $\mathcal{C}$ is also DA learnable for source and target distributions $\mathcal{D}_S$ and $\mathcal{D}_T$ over $\mathcal{X}$ satisfying $W(\mathcal{D}_S, \mathcal{D}_T) = 1/w$ in a number of steps that is polynomial in the relevant parameters and $w$.
\end{theorem}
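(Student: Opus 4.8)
The plan is to reduce DA learning directly to the PAC learning algorithm that is assumed to exist for $\mathcal{C}$, using the weight ratio assumption to convert a low source error into a low target error. The central observation is that the condition $W(\mathcal{D}_S, \mathcal{D}_T) = 1/w$ controls exactly how much the probability of any event can inflate when we pass from the source to the target distribution. Unpacking the infimum in the definition of $W$, for every event $E \subseteq \mathcal{X}$ with $P_{\mathcal{D}_T}(E) \neq 0$ we have $P_{\mathcal{D}_S}(E)/P_{\mathcal{D}_T}(E) \geq 1/w$, which rearranges to $P_{\mathcal{D}_T}(E) \leq w\, P_{\mathcal{D}_S}(E)$.

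First I would invoke the hypothesis that $\mathcal{C}$ is efficiently PAC learnable to obtain a learning algorithm $\mathcal{A}$. I would then run $\mathcal{A}$ on the labeled source oracle $EX(c, \mathcal{D}_S)$, but feed it the rescaled accuracy parameter $\epsilon' = \epsilon/w$ together with the original confidence $\delta$. By the PAC guarantee, with probability at least $1-\delta$ the algorithm returns a hypothesis $h \in \mathcal{C}$ with $error_{\mathcal{D}_S}(h) \leq \epsilon/w$. Note that this strategy never queries the unlabeled target oracle $EX(\mathcal{D}_T)$; the weight ratio assumption alone carries the adaptation, and the definition of DA learnability only requires that \emph{some} such algorithm exist.

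Next I would apply the event-inflation bound to the specific error event $E = \{x \in \mathcal{X} : h(x) \neq c(x)\}$, for which $P_{\mathcal{D}_S}(E) = error_{\mathcal{D}_S}(h)$ and $P_{\mathcal{D}_T}(E) = error_{\mathcal{D}_T}(h)$. If $P_{\mathcal{D}_T}(E) = 0$ the target error is trivially at most $\epsilon$; otherwise $E$ lies in the set $Z$ of the weight-ratio definition, so $error_{\mathcal{D}_T}(h) \leq w\, error_{\mathcal{D}_S}(h) \leq w \cdot (\epsilon/w) = \epsilon$. This holds on the same probability-$(1-\delta)$ event on which the PAC guarantee succeeded, which establishes the required DA guarantee.

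Finally I would account for the running time. Since $\mathcal{A}$ runs in time polynomial in $1/\epsilon'$, $1/\delta$, $n$, and $size(c)$, and $1/\epsilon' = w/\epsilon$, the overall procedure runs in time polynomial in $1/\epsilon$, $1/\delta$, $n$, $size(c)$, and $w$, as claimed. The only real subtlety, and the single place where care is needed, is the handling of the degenerate case $P_{\mathcal{D}_T}(E) = 0$, since the weight ratio is only defined over events of nonzero target probability; the remainder is a direct application of the PAC assumption. I expect no genuine obstacle beyond confirming that the linear-in-$w$ blow-up of the accuracy parameter keeps both the sample complexity and the runtime polynomial.
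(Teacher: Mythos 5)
Your proof is correct, and it reaches the paper's key inequality $error_{\mathcal{D}_T}(h) \leq w\, error_{\mathcal{D}_S}(h)$ by a genuinely different and more elementary route. The paper first converts the event-level weight ratio into a pointwise density bound $\mathcal{D}_T(x)/\mathcal{D}_S(x) \leq w$ almost everywhere, writes $error_{\mathcal{D}_S}(h)$ as an integral of the PAC loss against the source density, and then reweights the integrand; this requires that densities exist and that the almost-everywhere bound actually follows from the infimum over events (a step the paper asserts without justification). You observe instead that for the 0--1 loss the source and target errors are themselves the probabilities of the single event $E = \{x \in \mathcal{X} : h(x) \neq c(x)\}$, so the definition of $W(\mathcal{D}_S, \mathcal{D}_T)$ applies directly to $E$, with the only care point being the degenerate case $P_{\mathcal{D}_T}(E) = 0$, which you handle explicitly. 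This sidesteps the paper's implicit measure-theoretic assumptions entirely, at the cost of being specific to the 0--1 loss, whereas the paper's integral formulation is the version that generalizes to arbitrary bounded losses (the machinery its Section 4 relies on). You are also more complete about the reduction itself: the paper stops at the error inequality and asserts polynomiality, while you spell out the actual algorithm (run $\mathcal{A}$ with rescaled accuracy $\epsilon' = \epsilon/w$ and the same $\delta$), note correctly that the unlabeled oracle $EX(\mathcal{D}_T)$ is never queried and need not be, and verify that the runtime stays polynomial because $1/\epsilon' = w/\epsilon$ --- which, incidentally, makes the blow-up explicitly linear in $w$ inside the PAC algorithm's polynomial, a sharper accounting than the paper's unquantified ``polynomial in $w$.''
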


\begin{proof}
First, we notice that the assumption $W(\mathcal{D}_S, \mathcal{D}_T) = 1/w$ implies that $\frac{\mathcal{D}_T(x)}{\mathcal{D}_S(x)} \leq w$ almost everywhere, as both $\mathcal{D}_S$ and $\mathcal{D}_T$ are measures over $\mathcal{X}$. This simply means that if for all events the probability ratio is bounded, then the distribution ration is also bounded by the same constant. \\
Second, let $L(h(x), c(x))$ be the PAC loss such that $L(h(x), c(x)) = 1$ when $h(x)\neq 1$ and 0 otherwise. We can express the error of hypothesis $h$ in terms of the PAC loss as:
\begin{equation}
error_{\mathcal{D}_S}(h) = \int_{\mathcal{X}} L(h(x), c(x)) \mathcal{D}_S(x) dx
\end{equation}
Furthermore, we can restrict the above to the domain $\mathcal{X}_D$ of $\mathcal{D}_S$, and use our assumption on the weight ratio $\frac{\mathcal{D}_T}{\mathcal{D}_S}$ to obtain the following inequality.
\begin{equation}
w ~ error_{\mathcal{D}_S}(h) \geq \int_{\mathcal{X}_D} L(h(x), c(x)) \mathcal{D}_S(x) \frac{\mathcal{D}_T(x)}{\mathcal{D}_S(x)} dx = error_{\mathcal{D}_T}(h)
\end{equation}
Where in the last step we use our assumption about that the domain of the target distribution $\mathcal{D}_T$ is included in the domain of the source distribution $\mathcal{D}_S$.
This shows that Domain Adaptation only deteriorates the error by a factor of $w$, and that under the above assumptions it can be PAC learned in a number of steps that is polynomial $w$ and the other relevant parameters.
\end{proof}

\section{Rejection Sampling for Discrete Distributions}
Theorem 1 shows that Domain Adaptation on a concept class that is PAC learnable increases the number of samples by a polynomial factor in the weight ratio of the source and target distribution. However, the degree of the polynomial depends on problem-specific assumptions. The degree could potentially be very large, in which case Domain Adaptation setting would increase significantly the number of training samples required for accurate predictions compared to the pure PAC problem. In literature, there are many examples of effective methods for specific problems that help solve a shift between training and testing distributions \cite{DBLP:conf/icml/BickelBS07,DBLP:conf/nips/HuangSGBS06,DBLP:journals/jip/TsuboiKHBS09}. Our work provides a framework to understand the reasons why these methods work, and their underlying assumptions. An important example of such empirical approach to solve the Domain Adaptation problem is rejection sampling \cite{DBLP:journals/sigpro/MartinoM10}. We provide an analysis of the rejection sampling algorithm that demonstrate that under certain scenarios it can achieve second degree polynomial increase in the number of samples.

We begin our analysis by relating the errors under the source and target distributions with the distance between the distributions. As mentioned before we use the $L1$ distance as the primary metric to express the distance between distributions. 
\begin{proposition} 
Assuming a loss function with given upper bound $M$ and some measure theoretic constraints on $\mathcal{D}_S,\mathcal{D}_T$:
\begin{equation}
    \mathrm{disc}(\mathcal{D}_S, \mathcal{D}_T) \leq 2 M d(\mathcal{D}_S, \mathcal{D}_T)
\end{equation}
\end{proposition}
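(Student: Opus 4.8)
The plan is to bound the discrepancy term by term. Fix an arbitrary hypothesis $h \in H$ and consider the quantity $\left|\mathcal{L}_{\mathcal{D}_S}(h) - \mathcal{L}_{\mathcal{D}_T}(h)\right|$; if I can bound this uniformly in $h$ by $2Md(\mathcal{D}_S,\mathcal{D}_T)$, then taking the supremum over $h$ yields the claim immediately. First I would write each expected loss as an integral, $\mathcal{L}_{\mathcal{D}_S}(h) = \int_{\mathcal{X}} L(h(x),y)\, \mathcal{D}_S(x)\, dx$ and similarly for $\mathcal{D}_T$, so that their difference becomes a single integral $\int_{\mathcal{X}} L(h(x),y)\,\bigl(\mathcal{D}_S(x) - \mathcal{D}_T(x)\bigr)\, dx$ against the signed measure $\mathcal{D}_S - \mathcal{D}_T$.

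The key step is to control this integral using the boundedness of the loss together with the $L1$ distance. Since $0 \le L(h(x),y) \le M$ pointwise, I would bound the absolute value of the integral by $M \int_{\mathcal{X}} \left|\mathcal{D}_S(x) - \mathcal{D}_T(x)\right|\, dx$, i.e.\ by $M$ times the total variation of the signed measure $\mathcal{D}_S - \mathcal{D}_T$. What remains is to relate this total variation to the distance $d(\mathcal{D}_S,\mathcal{D}_T)$ as defined via the supremum over events. The standard fact here is the Hahn--Jordan decomposition: splitting $\mathcal{X}$ into the set $E^{+}$ where $\mathcal{D}_S \ge \mathcal{D}_T$ and its complement $E^{-}$, the total variation equals $\bigl(P_{\mathcal{D}_S}(E^{+}) - P_{\mathcal{D}_T}(E^{+})\bigr) + \bigl(P_{\mathcal{D}_T}(E^{-}) - P_{\mathcal{D}_S}(E^{-})\bigr)$, and each of these two nonnegative terms is at most $d(\mathcal{D}_S,\mathcal{D}_T) = \sup_{E}\left|P_{\mathcal{D}_S}(E) - P_{\mathcal{D}_T}(E)\right|$. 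Summing gives $\int_{\mathcal{X}} \left|\mathcal{D}_S(x) - \mathcal{D}_T(x)\right|\, dx \le 2\, d(\mathcal{D}_S,\mathcal{D}_T)$, and the factor of $2$ is exactly what produces the claimed constant.

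Combining these two bounds yields $\left|\mathcal{L}_{\mathcal{D}_S}(h) - \mathcal{L}_{\mathcal{D}_T}(h)\right| \le 2M\, d(\mathcal{D}_S,\mathcal{D}_T)$ for every $h$, and taking the supremum completes the proof.

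I expect the main obstacle to be the measure-theoretic justification connecting the total variation integral to the event-supremum definition of $d$ --- this is precisely where the unstated ``measure theoretic constraints on $\mathcal{D}_S,\mathcal{D}_T$'' in the hypothesis do their work (guaranteeing densities exist with respect to a common dominating measure so the integral representation is valid, and that the Hahn decomposition sets are measurable events). Everything else, namely the triangle-inequality splitting of the loss difference and the pointwise bound by $M$, is routine. One subtlety worth flagging is that the factor $2$ is genuinely necessary and is not an artifact of slack: it reflects that $L1$ distance measures one-sided event discrepancy whereas total variation accumulates the signed mass over both the positive and negative parts of $\mathcal{D}_S - \mathcal{D}_T$.
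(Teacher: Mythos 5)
Your proposal is correct and follows essentially the same route as the paper's proof in Appendix 1: both reduce to bounding $\left|\mathcal{L}_{\mathcal{D}_S}(h)-\mathcal{L}_{\mathcal{D}_T}(h)\right|$ uniformly in $h$, split $\mathcal{X}$ according to the sign of the density difference (the paper does this via the Radon--Nikodym derivative $g$ and the sets $\{g<1\}$ and $\{g\geq 1\}$, which coincide with your Hahn--Jordan decomposition of $\mathcal{D}_S-\mathcal{D}_T$), bound the loss pointwise by $M$, and bound each of the two resulting event-probability differences by $d(\mathcal{D}_S,\mathcal{D}_T)$, which is exactly where the factor $2$ arises in both arguments. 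Your phrasing in terms of the total variation of the signed measure is a cosmetic repackaging of the paper's argument rather than a genuinely different proof.
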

\begin{proof}
See Appendix 1 for the proof. 
\end{proof}

\begin{proposition}
\begin{align}
\mathcal{L}_{\mathcal{D}_{2}}(h) &\leq \mathcal{L}_{\mathcal{D}_{S}}(h) +  \mathrm{disc}(\mathcal{D}_S, \mathcal{D}_T)\\
&\leq \mathcal{L}_{\mathcal{D}_{S}}(h) +  2Md(\mathcal{D}_S, \mathcal{D}_T)
\end{align}
\end{proposition}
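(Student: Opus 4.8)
The plan is to establish the two inequalities one at a time, since each follows almost immediately from a result already in hand. The first line relies only on the definition of discrepancy, while the second is a direct substitution of Proposition 1. I read the left-hand side $\mathcal{L}_{\mathcal{D}_2}(h)$ as $\mathcal{L}_{\mathcal{D}_T}(h)$, since the discrepancy on the right is taken between $\mathcal{D}_S$ and $\mathcal{D}_T$ and the target loss is the quantity we wish to control.

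First I would fix the hypothesis $h \in H$ and recall that the discrepancy is defined as a supremum over the hypothesis class, $\mathrm{disc}(\mathcal{D}_S, \mathcal{D}_T) = \sup_{h' \in H}\left|\mathcal{L}_{\mathcal{D}_S}(h') - \mathcal{L}_{\mathcal{D}_T}(h')\right|$. Because $h$ is one particular element of $H$, its individual gap is dominated by the supremum, so $\left|\mathcal{L}_{\mathcal{D}_S}(h) - \mathcal{L}_{\mathcal{D}_T}(h)\right| \leq \mathrm{disc}(\mathcal{D}_S, \mathcal{D}_T)$. Discarding the absolute value to retain only the one-sided bound and rearranging gives $\mathcal{L}_{\mathcal{D}_T}(h) \leq \mathcal{L}_{\mathcal{D}_S}(h) + \mathrm{disc}(\mathcal{D}_S, \mathcal{D}_T)$, which is the first line. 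I would then chain this with Proposition 1, which already supplies $\mathrm{disc}(\mathcal{D}_S, \mathcal{D}_T) \leq 2M\,d(\mathcal{D}_S, \mathcal{D}_T)$; substituting this into the right-hand side yields the second line directly and completes the argument.

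The main obstacle here is bookkeeping rather than analysis: there is no hard analytic step, since the heavy lifting was carried out in Proposition 1. The only point demanding care is to confirm that the hypothesis $h$ appearing in this proposition genuinely lies in the class $H$ over which the discrepancy supremum is taken, as otherwise the bound $\left|\mathcal{L}_{\mathcal{D}_S}(h) - \mathcal{L}_{\mathcal{D}_T}(h)\right| \leq \mathrm{disc}(\mathcal{D}_S, \mathcal{D}_T)$ need not hold. I would therefore state this membership assumption explicitly to make the first inequality rigorous.
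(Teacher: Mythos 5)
Your proof is correct and follows essentially the same route as the paper's one-line argument: the first inequality from the definition of discrepancy (bounding the individual gap for $h$ by the supremum over $H$ and dropping the absolute value), and the second by substituting Proposition 1. Your added remark that $h$ must lie in the class $H$ over which the supremum is taken (and that $\mathcal{D}_2$ should be read as $\mathcal{D}_T$) is a reasonable clarification of details the paper leaves implicit.
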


\begin{proof}
The first line comes directly from the definition of discrepancy, and the second line comes from using Proposition 1. 
\end{proof}

Assuming the usual PAC learning loss function defined earlier with $M = 1$, we can rewrite equation (6) in terms of the error of a hypothesis $h$ under distributions $\mathcal{D}_S$ and $\mathcal{D}_T$:
\begin{equation}
error_{\mathcal{D}_{T}}(h) \leq error_{\mathcal{D}_{S}}(h) +  2d(\mathcal{D}_S, \mathcal{D}_T)\end{equation}

\subsection{Rescaling Distributions}
In this section we provide our main result for rejection sampling applied to the Domain Adaptation problem.

\begin{theorem}

Assume that there is a PAC learning algorithm $\mathcal{A}_S$ for concept class $\mathcal{C}$. Let $\mathcal{D}_S, \mathcal{D}_T$ be two discrete distribution over discrete universe $\mathcal{X}$. Assume that we are given access to an oracle $EX_S = EX(c, \mathcal{D}_S)$ which outputs labeled samples $\langle x, y = c(x)\rangle$ with $x\sim P_{\mathcal{D}_S}$, and to another oracle $EX_T = EX(\mathcal{D}_T)$ that outputs unlabeled samples $\langle x \rangle$ with $x \sim P_{\mathcal{D}_T}$. Further assume the following two properties of $\mathcal{D}_S$ and $\mathcal{D}_T$:
\begin{enumerate}
    \item The two distributions are discrete distributions with finite standard deviation less than some constant $s$.
    \item The two distributions $\mathcal{D}_S$ and $\mathcal{D}_T$ have weight ratio of $1/w = W(\mathcal{D}_S, \mathcal{D}_T)$.
\end{enumerate}
Under these assumptions there exists a PAC learning algorithm $\mathcal{A}_T$ that outputs hypothesis $h$ such that for any $\epsilon, \delta > 0$
$$error_{\mathcal{D}_T}(h) \leq \epsilon$$
with probability at least $1-\delta$ in a number of steps that is polynomial in $1/\epsilon, 1/\delta$ and $k$ and second degree polynomial in $w$.
\end{theorem}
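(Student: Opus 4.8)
The plan is to use rejection sampling to transform the labeled draws from the source oracle $EX_S$ into labeled draws that are (approximately) distributed according to the target $\mathcal{D}_T$, and then to feed these samples to the assumed PAC learner $\mathcal{A}_S$. The weight-ratio assumption is exactly what makes this possible: as observed in the proof of Theorem 1, $W(\mathcal{D}_S,\mathcal{D}_T)=1/w$ forces $\mathcal{D}_T(x)\le w\,\mathcal{D}_S(x)$ for (almost) every $x$, so $\mathcal{D}_S$ is a valid proposal distribution for $\mathcal{D}_T$ with envelope constant $w$. Consequently, if we draw $x\sim\mathcal{D}_S$ and accept it with probability $\mathcal{D}_T(x)/\big(w\,\mathcal{D}_S(x)\big)\in[0,1]$, the accepted points are exactly $\mathcal{D}_T$-distributed; and because the draws come from the \emph{labeled} oracle, each accepted point arrives with its correct label $c(x)$. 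This yields, in effect, a labeled-target oracle built out of the labeled-source oracle.

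First I would use the finite--standard-deviation assumption to reduce to a manageable finite support. By Chebyshev's inequality, a discrete distribution with standard deviation at most $s$ places all but an $\eta$ fraction of its mass on a window of $O(s/\sqrt{\eta})$ points around its mean; applying this to both $\mathcal{D}_S$ and $\mathcal{D}_T$ restricts the whole problem to an effective support $\mathcal{X}_\eta$ whose size is polynomial in $s$ and $1/\eta$. On $\mathcal{X}_\eta$ I would then estimate the two pmfs: empirical frequencies $\hat{\mathcal{D}}_S$ from $EX_S$ and $\hat{\mathcal{D}}_T$ from the unlabeled oracle $EX_T$, taking enough draws (by a Hoeffding/Chernoff bound together with a union bound over the points of $\mathcal{X}_\eta$) that every estimate is accurate with total error controlled. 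The algorithm $\mathcal{A}_T$ then runs approximate rejection sampling --- draw $\langle x,c(x)\rangle$ from $EX_S$, accept with probability $\hat{\mathcal{D}}_T(x)/\big(w\,\hat{\mathcal{D}}_S(x)\big)$ clipped to $[0,1]$ --- and passes the accepted labeled points to $\mathcal{A}_S$.

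For correctness, note that with the \emph{exact} pmfs the accepted sample is distributed precisely as $\mathcal{D}_T$, so $\mathcal{A}_S$ returns an $h$ with $error_{\mathcal{D}_T}(h)\le\epsilon$ with probability $1-\delta$. With estimated pmfs the accepted points instead follow some distribution $\mathcal{D}'$, and the job is to show $d(\mathcal{D}',\mathcal{D}_T)$ is small: the $L1$ distance between the true and approximate acceptance laws is controlled by the estimation error on $\mathcal{X}_\eta$ plus the truncated tail mass $\eta$. Running $\mathcal{A}_S$ then yields a hypothesis with $error_{\mathcal{D}'}(h)$ small, and Proposition 2 with $M=1$, namely $error_{\mathcal{D}_T}(h)\le error_{\mathcal{D}'}(h)+2\,d(\mathcal{D}',\mathcal{D}_T)$, transfers the guarantee to $\mathcal{D}_T$ provided the estimation accuracy $\eta$ is driven down to $O(\epsilon)$. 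The sample count has two multiplicative sources of $w$: the expected acceptance probability is $\sum_x\mathcal{D}_S(x)\,\mathcal{D}_T(x)/(w\mathcal{D}_S(x))=1/w$, so producing the $k$ labeled target examples that $\mathcal{A}_S$ needs costs $\Theta(wk)$ source draws, while the estimation step must resolve probabilities down to the scale where acceptance is rare (of order $1/w$), contributing a second factor of $w$; together these give the claimed second-degree polynomial dependence on $w$, with the remaining dependence polynomial in $1/\epsilon$, $1/\delta$ and $k$.

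The main obstacle is the correctness analysis of the \emph{approximate} rejection step: I must bound how the per-point estimation errors in $\hat{\mathcal{D}}_S$ and $\hat{\mathcal{D}}_T$ propagate into the $L1$ distance $d(\mathcal{D}',\mathcal{D}_T)$ of the accepted law, uniformly over the effective support and without the factor $1/\hat{\mathcal{D}}_S(x)$ in the acceptance ratio blowing up on low-probability points. This is precisely where the finite-variance hypothesis earns its place --- it caps the support size so the union bound over $\mathcal{X}_\eta$ stays polynomial, and it forces the tail mass to be negligible, keeping the total $L1$ perturbation below the $O(\epsilon)$ threshold that Proposition 2 requires.
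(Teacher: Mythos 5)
Your proposal follows essentially the same route as the paper: rejection sampling from $EX_S$ with acceptance ratios built from Chernoff-estimated pmfs $\hat{p}_{1,i},\hat{p}_{2,i}$, a Chebyshev truncation to an effective support of size $O(s/\sqrt{\epsilon})$, and transfer of the PAC guarantee from the surrogate distribution to $\mathcal{D}_T$ via Proposition 2, arriving at the same second-degree dependence on $w$. The one step you flag as the main obstacle --- keeping the acceptance ratio from blowing up on low-mass points \emph{inside} the effective support --- is exactly what the paper's Lemma 1 handles, and note it is resolved not by the finite-variance hypothesis (which only caps the support size) but by a heavy/light split at threshold $p_{1i}\ge \epsilon/(2nw)$, where the weight ratio $p_{2i}\le w\,p_{1i}$ bounds the total target mass of light points so their contribution to $d(\mathcal{D}_f,\mathcal{D}_T)$ is negligible.
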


\subsubsection{Description of the Algorithm}
As our only source of labeled data is distributed according to $\mathcal{D}_S$, our general approach will consist in using rejection sampling to create a labeled data set following a distribution that approximates $\mathcal{D}_T$ as closely as possible. We propose the following algorithm for rejection sampling:
\begin{enumerate}
    \item Using $m_1$ samples from oracles $EX_S$ and $EX_T$ obtain estimators $\hat{p}_{1,i},\hat{p}_{2,i}$ of the probabilities that random samples from $\mathcal{D}_S$ and $\mathcal{D}_T$ respectively equal $i \in \mathcal{X}$ (we use $i$ to denote discrete elements in the input space). 
    \item Create a new data set by taking $m_2$ samples from $\mathcal{D}_S$. When a new value $i$ is sampled, accept it with probability proportional to $\frac{\hat{p}_{2,i}}{\hat{p}_{1,i}}$.
    \item Train the existing PAC algorithm on the new data set
\end{enumerate}
Let $\mathcal{D}_f$ be the distribution obtained through the process described above. 

\subsubsection{Correctness of the Algorithm} We divide the analysis of the rejection sampling algorithm in several parts. We begin by proving the following claim showing that the result in Theorem 2 is true, under an assumption that will be demonstrated in the next paragraph.
\begin{claim}
Assume that using the proposed algorithm we construct $\mathcal{D}_f$ such that $d(\mathcal{D}_f,\mathcal{D}_T)\leq \epsilon/4$ with probability greater than $1-\delta/2$.
The hypothesis obtained by training $\mathcal{A}_S$ with parameters $\epsilon/2, \delta/2$ on $\mathcal{D}_f$ will, with probability at least $1-\delta$ give an error rate of at most $\epsilon$ when tested on $\mathcal{D}_T$.
\end{claim}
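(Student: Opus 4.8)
The plan is to treat the synthetic distribution $\mathcal{D}_f$ exactly as the source distribution in the error-transfer inequality of Proposition 2, and then to combine the PAC guarantee on $\mathcal{D}_f$ with the assumed closeness of $\mathcal{D}_f$ to $\mathcal{D}_T$ through a union bound. The key observation is that Propositions 1 and 2 were established for an arbitrary pair of distributions, so nothing prevents instantiating them with $\mathcal{D}_f$ and $\mathcal{D}_T$ in place of $\mathcal{D}_S$ and $\mathcal{D}_T$. Taking the PAC loss with $M = 1$, inequality (8) then reads
\[
error_{\mathcal{D}_T}(h) \leq error_{\mathcal{D}_f}(h) + 2\, d(\mathcal{D}_f, \mathcal{D}_T).
\]

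First I would argue that running $\mathcal{A}_S$ on the data set produced in step 2 of the algorithm is the same as running it on labeled examples drawn i.i.d. from $\mathcal{D}_f$: rejection sampling merely filters the labeled pairs emitted by $EX_S$, so every retained example keeps its correct label $c(x)$ and, by construction of $\mathcal{D}_f$, the retained examples are distributed according to $\mathcal{D}_f$. Thus $\mathcal{A}_S$ is effectively given access to $EX(c, \mathcal{D}_f)$, and invoking it with accuracy parameter $\epsilon/2$ and confidence parameter $\delta/2$ produces, with probability at least $1 - \delta/2$, a hypothesis $h$ satisfying $error_{\mathcal{D}_f}(h) \leq \epsilon/2$.

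Next I would invoke the hypothesis of the claim, that the construction guarantees $d(\mathcal{D}_f, \mathcal{D}_T) \leq \epsilon/4$ with probability greater than $1 - \delta/2$. Writing $A$ for the event $error_{\mathcal{D}_f}(h) \leq \epsilon/2$ and $B$ for the event $d(\mathcal{D}_f, \mathcal{D}_T) \leq \epsilon/4$, the union bound gives $P(A \cap B) \geq 1 - P(\overline{A}) - P(\overline{B}) \geq 1 - \delta/2 - \delta/2 = 1 - \delta$. On the event $A \cap B$, substituting both bounds into the displayed inequality yields
\[
error_{\mathcal{D}_T}(h) \leq \frac{\epsilon}{2} + 2 \cdot \frac{\epsilon}{4} = \epsilon,
\]
which is exactly the conclusion, holding with probability at least $1 - \delta$.

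The arithmetic here is routine; the step that needs care is justifying the use of Proposition 2 on the pair $(\mathcal{D}_f, \mathcal{D}_T)$, which requires checking that $\mathcal{D}_f$ meets the measure-theoretic constraints of Proposition 1, and confirming that the two events $A$ and $B$ live on compatible sources of randomness so that the union bound is legitimate. The genuinely hard analytic work, showing that the algorithm does achieve $d(\mathcal{D}_f, \mathcal{D}_T) \leq \epsilon/4$ within the stated confidence and sample budget, is deliberately set aside: the claim isolates it as a hypothesis, so it lies outside this argument.
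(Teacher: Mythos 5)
Your proposal is correct and follows essentially the same route as the paper: instantiate Proposition 2 with $(\mathcal{D}_f, \mathcal{D}_T)$, apply the PAC guarantee of $\mathcal{A}_S$ on $\mathcal{D}_f$ with parameters $\epsilon/2, \delta/2$, and combine with the assumed bound $d(\mathcal{D}_f,\mathcal{D}_T)\leq \epsilon/4$ via a union bound to get $error_{\mathcal{D}_T}(h) \leq \epsilon/2 + 2\cdot\epsilon/4 = \epsilon$ with probability at least $1-\delta$. Your added care in verifying that rejection sampling genuinely yields labeled i.i.d.\ draws from $\mathcal{D}_f$ and that the two failure events share a common probability space is a worthwhile explicitness the paper's terse proof omits, but it does not change the argument.
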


\begin{proof}
Let $h$ be the hypothesis selected by the PAC algorithm $\mathcal{A}_S$ on $\mathcal{D}_f$. 
We know from the PAC guarantees of $\mathcal{A}_S$ that with probability $1-\delta/2$, $error_{\mathcal{D}_f}(h)<\epsilon/2$. Hence, using Proposition 2 and the union bound, with probability at least $1-\delta$,
$error_{\mathcal{D}_T}(f)\leq \epsilon$
as desired. This concludes the proof of the claim.
\end{proof}

To complete the proof of Theorem 2, we are thus left with showing that using the procedure described above we can efficiently approximate $\mathcal{D}_T$ by $\mathcal{D}_f$ with high probability.

\subsubsection{Bounding the Distance between Distributions}
Our approximation of $\mathcal{D}_T$ by $\mathcal{D}_f$ has the following source of error:  we do not know the true values of $p_{1i},p_{2i}$ and instead we are rejecting using estimates $\hat{p}_{1i},\hat{p}_{2i}$. To solve this issue we need to sample enough points in step $1$ of the algorithm so that the estimates $\hat{p}_{1i},\hat{p}_{2i}$ are close to true values $p_{1i},p_{2i}$.

\begin{lemma}[Finite Case]
    Let $\mathcal{D}_S$ and $\mathcal{D}_T$ be the source and target distributions as defined above, with the additional constraint that both distributions have a finite support set. Assume that set to be $\{1, 2, ...,n\}$. Then for any $\epsilon, \delta$ we show that with probability at least $1-\delta$, $d(\mathcal{D}_f,\mathcal{D}_T)<\epsilon$ by taking $m_1$ samples in step $1$ of our algorithm to approximate $p_{1i},p_{2i}$, where $m_1$ is polynomial in $n, \frac{1}{\epsilon},\frac{1}{\delta}$, and second degree polynomial in $w$.
\end{lemma}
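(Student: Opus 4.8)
The plan is to treat the lemma as a stability statement about the rejection step: I would first write the output distribution $\mathcal{D}_f$ in closed form, show that it coincides with $\mathcal{D}_T$ exactly when the probability estimates are perfect, and then quantify how quickly this identity degrades as the estimates $\hat p_{1i},\hat p_{2i}$ approach the true $p_{1i},p_{2i}$. Since the normalising constant in the acceptance probability cancels, a point $i$ drawn from $\mathcal{D}_S$ with probability $p_{1i}$ and kept with probability proportional to $\hat r_i:=\hat p_{2i}/\hat p_{1i}$ yields
\[
\mathcal{D}_f(i)=\frac{p_{1i}\,\hat r_i}{\sum_{j=1}^{n} p_{1j}\,\hat r_j},\qquad r_i:=\frac{p_{2i}}{p_{1i}} .
\]
Because $\sum_{j} p_{1j} r_j=\sum_{j} p_{2j}=1$, perfect estimates ($\hat r_i=r_i$) give $\mathcal{D}_f(i)=p_{1i}r_i=p_{2i}=\mathcal{D}_T(i)$, so the entire lemma reduces to bounding the perturbation caused by sampling noise in step~$1$.

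Next I would control that noise. Each $\hat p_{ji}$ is an empirical frequency from $m_1$ i.i.d.\ draws, so by Hoeffding's inequality together with a union bound over the $2n$ estimated probabilities, the event $|\hat p_{ji}-p_{ji}|\le\eta$ holds simultaneously for all $i$ and $j\in\{1,2\}$ with probability at least $1-\delta$ provided $m_1\ge\frac{1}{2\eta^{2}}\ln\frac{4n}{\delta}$. The remaining task is then purely deterministic: choose $\eta$ as a function of $\epsilon,w,n$ so that on this good event $d(\mathcal{D}_f,\mathcal{D}_T)<\epsilon$.

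For the deterministic propagation I would use the discrete total-variation identity $d(\mathcal{D}_f,\mathcal{D}_T)=\tfrac12\sum_i|\mathcal{D}_f(i)-\mathcal{D}_T(i)|$ and, writing $Z:=\sum_j p_{1j}\hat r_j$ (which satisfies $|Z-1|\le\sum_j p_{1j}|\hat r_j-r_j|$), reduce everything to the single quantity $S:=\sum_i p_{1i}|\hat r_i-r_i|$ via $\sum_i|\mathcal{D}_f(i)-\mathcal{D}_T(i)|\le 2S/Z$. To bound $S$ I would clear the denominator,
\[
p_{1i}|\hat r_i-r_i|=\frac{\bigl|p_{1i}(\hat p_{2i}-p_{2i})+p_{2i}(p_{1i}-\hat p_{1i})\bigr|}{\hat p_{1i}},
\]
and invoke the weight-ratio hypothesis in the form $r_i=p_{2i}/p_{1i}\le w$, exactly the implication already derived in the proof of Theorem~$1$. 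For indices with $p_{1i}\ge 2\eta$ one has $\hat p_{1i}\ge p_{1i}/2$, so the displayed term is at most $2\eta(1+r_i)\le 2\eta(1+w)$; summing over the at most $n$ such indices gives $S=O(wn\eta)$. Setting $\eta=\Theta(\epsilon/(wn))$ then forces $d(\mathcal{D}_f,\mathcal{D}_T)<\epsilon$, and substituting this back into the Hoeffding requirement yields $m_1=O\!\bigl(\tfrac{w^{2}n^{2}}{\epsilon^{2}}\ln\tfrac{n}{\delta}\bigr)$ — polynomial in $n,1/\epsilon,1/\delta$ and second degree in $w$, as claimed. The $w^{2}$ is forced by the fact that the tolerable additive error scales like $1/w$ (the ratio $r_i$ can amplify the denominator error up to a factor $w$) while Hoeffding charges $1/\eta^{2}$.

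The hard part will be the indices with very small $p_{1i}$, and in particular those never sampled, where $\hat p_{1i}=0$ and $\hat r_i$ is undefined or explodes. There the ratio estimate carries no information, so I would handle this region separately: cap the acceptance ratio at $w$ (legitimate because the true $r_i\le w$) and argue that both distributions place only $O(wn\eta)$ mass on $\{i:p_{1i}<2\eta\}$ — for $\mathcal{D}_T$ because $p_{2i}\le w\,p_{1i}$, and for $\mathcal{D}_f$ because the cap gives $\mathcal{D}_f(i)\le w\,p_{1i}/Z$. Showing that this low-probability tail can be absorbed without pushing the $w$-dependence past quadratic is the delicate step; the rest is concentration plus bookkeeping.
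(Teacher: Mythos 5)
Your proposal is correct, and it reaches the lemma's conclusion ($m_1$ polynomial in $n,1/\epsilon,1/\delta$ and quadratic in $w$), but it diverges from the paper's proof in several substantive ways. The shared skeleton is the same: estimate the $2n$ probabilities, union-bound a concentration inequality, split indices by a threshold on $p_{1i}$, and bound $\sum_i |P_{\mathcal{D}_T}(i)-P_{\mathcal{D}_f}(i)|$. The differences: (1) the paper uses \emph{multiplicative} Chernoff bounds (relative error $\epsilon/16$) with heavy threshold $p_{1i}\geq \epsilon/(2nw)$, yielding $m_1 = O\bigl(nw^2\epsilon^{-3}\log(n/\delta)\bigr)$, whereas your additive Hoeffding bound with $\eta = \Theta(\epsilon/(wn))$ yields $m_1 = O\bigl(n^2w^2\epsilon^{-2}\log(n/\delta)\bigr)$ --- a worse $n$-dependence but better $\epsilon$-dependence; both satisfy the lemma. (2) You carry the normalizing constant $Z$ explicitly and propagate error through it via $|Z-1|\leq S$, while the paper writes $P_{\mathcal{D}_f}(i)=\hat{p}_{2i}\,p_{1i}/\hat{p}_{1i}$ with no normalization --- yet its light-point argument (``sum of probabilities for all $i$ sum to 1'') requires $\mathcal{D}_f$ to be normalized, an internal tension your treatment resolves. (3) For light points the routes genuinely differ: the paper uses a complement trick --- heavy masses of $\mathcal{D}_f$ and $\mathcal{D}_T$ agree to $\epsilon/4$, hence so do light masses, and both light masses are small by $p_{2i}\leq w p_{1i}$, so an interleaving bound controls the light L1 sum without ever touching $\hat{r}_i$ on light points; you instead modify the algorithm by capping the acceptance ratio at $w$, which also fixes a lacuna the paper silently ignores, namely that $\hat{p}_{2i}/\hat{p}_{1i}$ is undefined when a point is never sampled ($\hat{p}_{1i}=0$). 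Your cap is sound: since the true $r_i \leq w$, clipping can only decrease $|\min(\hat{r}_i,w)-r_i|$, so the heavy analysis survives, and on light points $p_{1i}\bigl|\min(\hat{r}_i,w)-r_i\bigr| \leq w\,p_{1i} \leq 2w\eta$, so in fact the capped scheme lets your single bound $S = O(wn\eta)$ absorb the tail uniformly --- the ``delicate step'' you flag closes more easily than you feared, and without pushing the $w$-dependence past quadratic. Your denominator-clearing identity also replaces the paper's sign-based case analysis (``when both are larger\dots'') with a one-line algebraic bound, which is tidier. In short: same architecture, but your version is more careful about normalization and the zero-count corner case, at the price of an extra factor of $n/\epsilon^{-1}$ trade in the sample bound.
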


\noindent The full proof of Lemma 1 can be found in Appendix $2$. We prove Lemma 1 using the Chernoff bound for points with probability mass $p_{1i}$ larger than some threshold while ignoring points with negligible probability mass.

\subsubsection{Reduction to Finite Case} 
We show that we can restrict the domain of $\mathcal{D}_S$ and $\mathcal{D}_T$ to a finite set that has more than $1-\epsilon/2$ of the probability mass under both distributions. 
From Chebyshev's inequality, the probability that a sample is $k$ away from the mean is bounded above by $\frac{s^2}{k^2}$ for both of our distributions, where $s$ is the upper bound on the standard deviations. We want:  
$$\frac{s^2}{k^2}<\epsilon/2$$ 
\noindent This means that it is enough to pick $n=2s\sqrt{2/\epsilon}$ as we have values on the both sides of the mean. We can now use Lemma 1 above, with the specified $n$, and plugging in $\epsilon/2$ instead of $\epsilon$ for the desired accuracy parameter to complete our demonstration of Theorem 2. Notice that the part of the domain that was not in the finite restricted domain set $\{1,...,n\}$ can only contribute $\epsilon/2$ to the  overall distance. 

\subsubsection{Algorithm Complexity}
We show that the complexity of each step of the algorithm is polynomial in the relevant parameters, and at most second degree polynomial in the weight ration $w$.

\noindent \textbf{Step 1:} the number $m_1$ of samples that we need is, from Lemma 1, a polynomial in  $n,w,\frac{1}{\epsilon},\frac{1}{\delta}$.

\noindent\textbf{Step 2:} To analyze the complexity of step 2, first let $m_2'$ be the required number of examples by the given algorithm $\mathcal{A}_S$ to provide $\epsilon/2, \delta/2$ PAC guarantees on $\mathcal{D}_f$. This is polynomial in the relevant parameters. Furthermore, condition 2 in the setup of Theorem $1$ gives us the bound:
$$\frac{1}{w}\leq \frac{P_{\mathcal{D}_S}(i)}{P_{\mathcal{D}_T}(i)}$$
for any value $i$. After re-normalization we can bound below the rejection probabilities by $\frac{1}{w^2}$ for any $i$. By taking  $m_2=m_2'w^2\log (4/\delta)$ samples, even with a rejection probability $\frac{1}{w^2}$ we still get at least $m_2'$ samples after rejection with probability at least $1-\delta/4$ by Chernoff's Inequality. As $m_2'$ is polynomial in the relevant parameters by the existing PAC guarantees of the $\mathcal{A}_S$, $m_2$ will also be polynomial in said parameters, hence the complexity of step  2 is also polynomial in the relevant parameters. 

\noindent \textbf{Step 3:} Last but not least, complexity of step 3 is polynomial in $1/\epsilon,1/\delta$ and other relevant parameters coming with the algorithm provided, due to the PAC guarantee.\\

Overall, the complexity of all the rejection sampling algorithm described is:
\begin{equation}
f(1/\epsilon,1/\delta)w^2\log (4/\delta)+\log(8s\sqrt{2/\epsilon})+\log(1/\delta))\bigg(\frac{1}{\epsilon^3}2^{15}s\sqrt{2/\epsilon}w^2\bigg)
\end{equation}
\noindent where $f$ is the polynomial coming from the PAC guarantee for the given algorithm.

\section{Conclusion}
In this paper, we start out in section 3 by re-framing a previous result in literature suggesting that learning in the context of Domain Adaptation is hard. Despite the existence of some hard Domain Adaptation problems shown in the literature, we argue that this does not condemn Domain Adaptation as a whole. In Theorem 1, we prove that if a problem is efficiently PAC learnable, then the introduction of a bounded covariate shift, with finite density ratio between distributions, does not add much complexity to the underlying PAC learning problem, which remains computable in polynomial number of samples.

On the surface, this would suggest that the remedy in cases of covariate shift is simply to collect more data as efficient PAC learning implies that the error scales polynomially with the number of training data points required. However, the density ratios between the two distributions combined with the possibility of a high-order polynomial dependency suggest that a covariate shift could easily require orders of magnitude more data than in the non-shift case. Lowering the data burden even further would then seem to be the theoretical motivation for the various techniques and algorithms developed for combating covariate shift. In particular, we provide an analysis of the rejection sampling algorithm showing that it can limit the data burden to a second degree polynomial in the density ratio under the assumption of discrete distributions with finite standard deviation.

An area of future research would be to pursue a general lower bound for the added data burden in the covariate shift setting that is agnostic to the algorithm used. This would be in contrast to our Theorem 1 which in some sense provides an upper bound on the covariate shift data burden and add to our results in section 4 by generalizing the bound we achieved for the discrete and finite variance case.

\section*{Acknowledgement} 
We are grateful to Professor Leslie Vliant and Daniel Moroz for inspiring us to work on this problem. We thank Marco Sangiovanni and Vinitra Swamy for the rigorous review of our results.

\newpage
\section*{Appendix 1:}
\begin{proof}[Proposition 1]
To demonstrate Proposition 1 it is sufficient to show that for all hypothesis $h$:
$$\left|\mathcal{L}_{\mathcal{D}_S}(h) - \mathcal{L}_{\mathcal{D}_T}(h)\right| \leq 2 M \sup_{E\subseteq X} |P_{\mathcal{D}_S}(E) - P_{\mathcal{D}_T}(E)|$$
In particular, let $f = L(h(x), y)$, we then have that for any two measures $\mathcal{D}_S, \mathcal{D}_T$:
$$\left|\mathcal{L}_{\mathcal{D}_S}(h) - \mathcal{L}_{\mathcal{D}_T}(h)\right| = \left|\int_\mathcal{X} f\mathcal{D}_S(x)dx - \int_X f\mathcal{D}_T(x)dx\right|$$
Futhermore, from the Radon-Nikodym theorem we know that $\exists g$ such at $g$ is measurable and the above is equal to:
$$=\left|\int_\mathcal{X} f (1-g)\mathcal{D}_S(x)dx\right|$$
Now Let $E_{+} = \{ x\in \mathcal{X} : g(x) < 1\}$, and $E_{-} = \{ x\in \mathcal{X} : g(x) \geq 1\}$. We break the above integral in two and use the triangle inequality:
$$=\left|\int_{E_+} f (1-g)\mathcal{D}_S(x)dx + \int_{E_-} f (1-g)\mathcal{D}_S(x)dx\right|$$
$$\leq\left|\int_{E_+} f (1-g)\mathcal{D}_S(x)dx\right| + \left|\int_{E_-} f (1-g)\mathcal{D}_S(x)dx\right|$$
Now we can use the fact that $f(1-g)$ is always positive on $E_+$, while $f(1-g)$ is always negative on $E_-$, and the fact that $f$ is bounded by $M$. The above then becomes:
$$\leq M(P_{\mathcal{D}_S}(E_+) - P_{\mathcal{D}_T}(E_+)) + M \left|P_{\mathcal{D}_S}(E_-) - P_{\mathcal{D}_T}(E_-)\right|$$
$$\leq 2M \sup_{E\subseteq \mathcal{X}}\left| P_{\mathcal{D}_S}(E) - P_{\mathcal{D}_T}(E)\right|$$
\end{proof}

\section*{Appendix 2:}

\begin{proof}[Lemma 1]
$$d(\mathcal{D}_T,\mathcal{D}_f)=\sup_E|P_{\mathcal{D}_T}(E)-P_{\mathcal{D}_f}(E)|$$
We can rewrite 
$$d(\mathcal{D}_T,\mathcal{D}_f)=\sup_E\bigg|\sum_{i\in E}  P_{\mathcal{D}_T}(i)-P_{\mathcal{D}_f}(i)\bigg|$$
Thus,
$$d(\mathcal{D}_T,\mathcal{D}_f)\leq \sup_E\sum_{i\in E}  |P_{\mathcal{D}_T}(i)-P_{\mathcal{D}_f}(i)|$$
so 
$$d(\mathcal{D}_T,\mathcal{D}_f)\leq \sum_{i=1}^n  |P_{\mathcal{D}_T}(i)-P_{\mathcal{D}_f}(i)|$$
Now we denoted $P_{\mathcal{D}_T}(i)$ with $p_{2,i}$, and from the definition of our algorithm, $P_{\mathcal{D}_f}(i)=|p_{2i}-\hat{p}_{2i} \frac{p_{1i}}{\hat{p}_{1i}}|$. Hence,
$$d(\mathcal{D}_T,\mathcal{D}_f)\leq \bigg(\sum_i \bigg|p_{2i}-\hat{p}_{2i} \frac{p_{1i}}{\hat{p}_{1i}}\bigg|\bigg)$$
Let's now divide our points in two subsets: a point is heavy if $p_{1i}\geq \frac{\epsilon}{2nw}$ and small otherwise. 
Consider now $i$ a heavy point. Remember that $\hat{p}_{1i}$ is an estimator of ${p}_{1i}$ obtained from $m_1$ samples.  From Chernoff's inequality we have
$$P(|{\hat{p}_{1i}}-p_{1i}|>p_{1i}\epsilon/16)\leq 2e^{-m_1\epsilon^2p_{1i}/256}$$
We want this to be less than $\delta/2n$. By taking log and simplifying, we want
$$m_1\geq (\log4n)+\log(1/\delta))\bigg(\frac{1}{\epsilon^2}256\frac{1}{p_{1i}}\bigg)$$
But ${p_{1i}}> \frac{\epsilon}{2nw}$ as $i$ is a heavy point so it is enough to pick 
$$m_1\geq (\log(4n)+\log(1/\delta))\bigg(\frac{1}{\epsilon^3}2^{10}nw\bigg)$$
With such choice of $m_1$, we know that 
$$P(|{\hat{p}_{1i}}-p_{1i}|>p_{1i}\epsilon/16)\leq \delta/4n$$
We now want to bound $|p_{2i}-\hat{p_{2i}}|$. Again, using Chernoff, 
$$P(|p_{2i}-\hat{p_{2i}}|>p_{2i}\epsilon/16)\leq 2e^{-m_1\epsilon^2p_{2i}/256}$$
Using condition 2 in the setup of Theorem 1 we get that $\frac{p_{1i}}{p_{2_i}}\geq \frac{1}{w}$. Together with the fact that $i$ is heavy, we obtain that $p_{2i}\geq \frac{\epsilon}{2nw^2}$
Using the same argument as for $p_{1i}$above, it is clear that by choosing
$$m_1\geq (\log(4n)+\log(1/\delta))\bigg(\frac{1}{\epsilon^3}2^{11}nw^2\bigg)$$ we obtain
$$P(|p_{2i}-\hat{p_{2i}}|>p_{2i}\epsilon/16)\leq \delta/4n$$
Using the union bound, this means 
$$P\bigg(|p_{2i}-\hat{p_{2i}}|>p_{2i}\epsilon/16\bigcap |\frac{p_{1i}}{\hat{p}_{1i}}-1|>\epsilon/16\bigg)\leq \delta/2n$$
Remember that we are trying to bound 
$$\bigg|p_{2i}-\hat{p}_{2i} \frac{p_{1i}}{\hat{p}_{1i}}\bigg|$$
 Given that we effectively bounded $\frac{p_{1i}}{\hat{p}_{1i}}$ and $\big|p_{2i}-\hat{p}_{2i}\big|$ the maximum of the difference can be obtained when either $\hat{p}_{2i}$ is bigger than ${p}_{2i}$ and $\frac{p_{1i}}{\hat{p}_{1i}}$ greater than $1$ or $\hat{p}_{2i}$ is smaller than ${p}_{2i}$ and $\frac{p_{1i}}{\hat{p}_{1i}}$ smaller than $1$. When both our larger, the difference is at most $p_{2i}(1+\epsilon/16)^2- p_{2i}=p_{2i}(\epsilon/16)(2+\epsilon/16)<p_{2i}\epsilon/4$. The analysis in the other case is similar, giving the same bound. Thus we obtain
$$P\bigg(\bigg|p_{2i}-\hat{p}_{2i} \frac{p_{1i}}{\hat{p}_{1i}}\bigg|>p_{2i}\epsilon/4\bigg)\leq \delta/2n$$
for all heavy $i$. Note that $\sum_i p_{2i}\leq 1$. As we have at most $n$ heavy points,
$$P\bigg(\bigg(\sum_{{i\text{ heavy}}} \bigg|p_{2i}-\hat{p}_{2i} \frac{p_{1i}}{\hat{p}_{1i}}\bigg|\bigg)>\epsilon/4\bigg)<\delta/2$$
Hence
$$P\bigg(\sum_{{i\text{ heavy}}} \big|P_{\mathcal{D}_T}(i)-P_{\mathcal{D}_f}(i)\big|>\epsilon/4\bigg)<\delta/2$$
In particular, the above also gives us 
$$P\bigg( \big|\sum_{{i\text{ heavy}}}P_{\mathcal{D}_T}(i)-\sum_{{i\text{ heavy}}}P_{\mathcal{D}_f}(i)\big|>\epsilon/4\bigg)<\delta/2$$
which is equivalent to
$$P\bigg(\big|\sum_{{i\text{ light}}}P_{\mathcal{D}_T}(i)-\sum_{{i\text{ light}}}P_{\mathcal{D}_f}(i)\big|>\epsilon/4\bigg)<\delta/2$$
as sum of probabilities for all $i$ sum to 1.
But using the fact that from condition 2 $p_{2i}\leq wp_{1i}$ $$\sum_{{i\text{ light}}}P_{\mathcal{D}_T}(i)=\sum_{{i\text{ light}}}p_{2i}\leq \sum_{{i\text{ light}}}w p_{1i}$$
Now when $i$ light, $p_{1i}\leq \frac{\epsilon}{2nw}$ and furthermore we have at most $n$ light $i$
$$\sum_{{i\text{ light}}}P_{\mathcal{D}_T}(i)\leq \epsilon/4$$
so 
$$P\bigg(\sum_{{i\text{ light}}}P_{\mathcal{D}_f}(i)> \epsilon/4\bigg)<\delta/2$$
Now note that for $i$ small, $P_{\mathcal{D}_f}(i), P_{\mathcal{D}_T}(i)$ are sequences of positive numbers with sum bounded by $\epsilon/2$, so the expression 
$$\sum_{{i\text{ light}}} \big|P_{\mathcal{D}_T}(i)-P_{\mathcal{D}_f}(i)\big|$$
is maximized when the sequences look like $a_1,0,a_2,0,...$ and $0,b_1,0,b_2,...$, with a maximum of $\sum a_i+\sum i$, hence bounded by $2\cdot\epsilon/4=\epsilon/2$. Hence, with probability at least $1-\delta/2$,
$$\sum_{{i\text{ light}}} \big|P_{\mathcal{D}_T}(i)-P_{\mathcal{D}_f}(i)\big|\leq \epsilon/2 $$
and also 
$$\sum_{i\text{ heavy}} \big|P_{\mathcal{D}_T}(i)-P_{\mathcal{D}_f}(i)\big|\leq \epsilon/4 $$
with probability at least $1-\delta/2$ hence by union bound with probability at least $1-\delta$
$$\sum_i \big|P_{\mathcal{D}_T}(i)-P_{\mathcal{D}_f}(i)\big|\leq \epsilon$$
as desired.
\end{proof}

\end{document}